\newcommand*\samethanks[1][\value{footnote}]{\footnotemark[#1]}
\title{Lexicographically Ordered Multi-Objective Clustering}
\author{Sainyam Galhotra\thanks{The first two authors have contributed equally to this work.} \and Sandhya Saisubramanian\samethanks \and Shlomo Zilberstein 
	\affiliations
	College of Information and Computer Sciences, 	University of Massachusetts Amherst \emails
	\{sainyam, saisubramanian, shlomo\}@cs.umass.edu \\	
}
\begin{document}
		
\maketitle
\begin{abstract}
	We introduce a rich model for multi-objective clustering with lexicographic ordering over objectives and a slack. The slack denotes the allowed multiplicative deviation from the optimal objective value of the higher priority objective to facilitate improvement in lower-priority objectives. We then propose an algorithm called Zeus to solve this class of problems, which is characterized by a \emph{makeshift} function.
	The makeshift fine tunes the clusters formed by the processed objectives so as to improve the clustering with respect to the unprocessed objectives, given the slack. We present makeshift for solving three different classes of objectives and analyze their solution guarantees. Finally, we empirically demonstrate the effectiveness of our approach on three applications using real-world data. 
\end{abstract}

\section{Introduction}
Identifying graph clusters, which are groups of similar or related entities~\cite{jain1999data}, is being increasingly employed for data-driven decision making in high-impact applications such as health care~\cite{haraty2015enhanced} and urban mobility~\cite{kumar2016understanding,saisubramanian2015risk}. Clustering with multiple objectives~\cite{law2004multiobjective,handl2007evolutionary} helps improve robustness of the solution and has proven to be beneficial in many applications such as resource sharing~\cite{chen2011parallel}, fairness~\cite{fairlet}, and team formation~\cite{farhadi2012teamfinder}. 
For example, consider a group of six friends who want to carpool to work in two cars (Figure~\ref{fig:illus-1}). When clustering for carpooling, it is important to minimize the maximum distance traveled by the driver ($o_1$) and balance the cluster sizes ($o_2$).

\begin{figure}[t]
	\centering
	\subfigure[Problem]{\includegraphics[scale=.3]{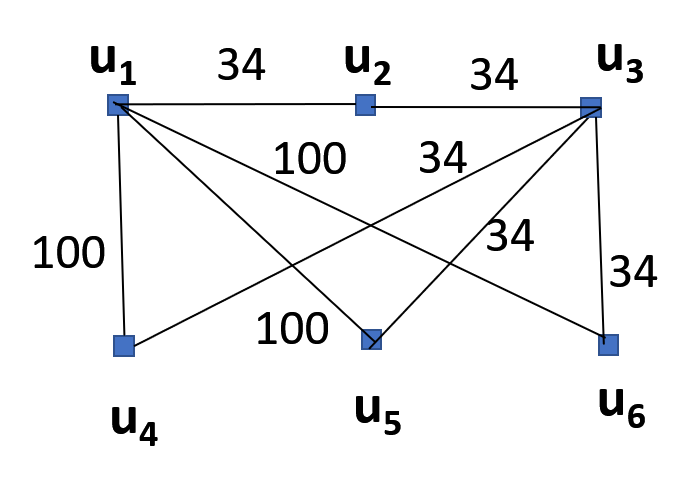}\label{ex-problem}}
	\subfigure[$o_1$]{\includegraphics[scale=.3]{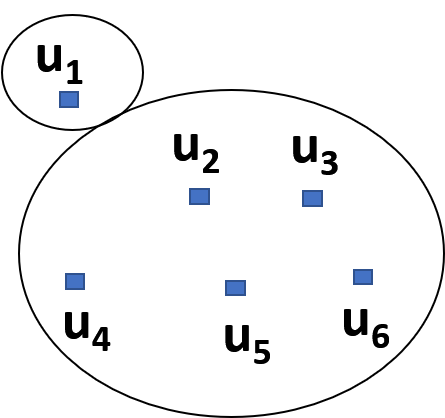}\label{ex-o1}}
	\subfigure[$o_1,o_2$]{\includegraphics[scale=.3]{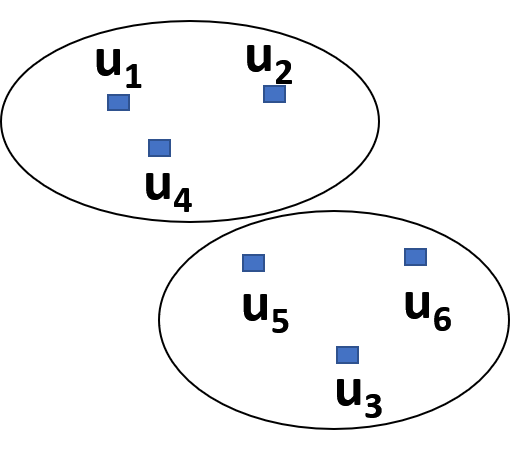}\label{ex-moc}}
	\subfigure[$o_1>o_2$]{\includegraphics[scale=.32]{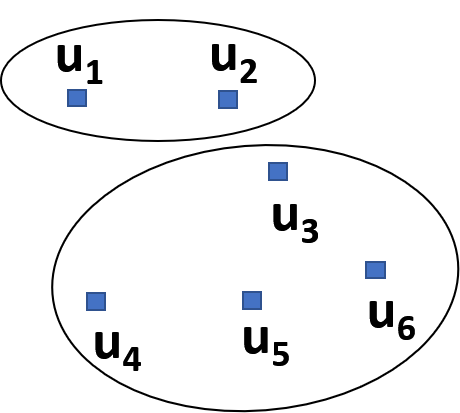}\label{ex-lmoc}}
	\quad
	\subfigure[Objective values]{\includegraphics[scale=.38]{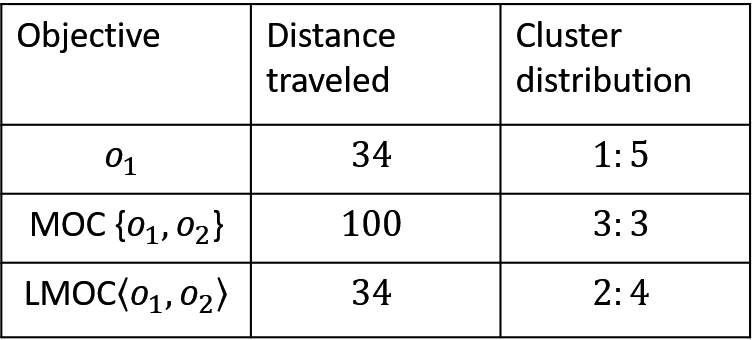}\label{tab:objectives}}
	\caption{An example of single and multi-objective clustering for the carpooling problem. Each edge weight denotes the pairwise distances between the nodes and all pairs which are not connected by an edge have a distance of 67.} 
	\label{fig:illus-1}
\end{figure}

Existing techniques that support multi-objective clustering (MOC) either leverage a scalarization function, which combines the multiple objectives into a single objective, or find clusters in parallel for each objective and combine the results using different approaches such as fitness function~\cite{jiamthapthaksin2009framework,veldt2018correlation,handl2007evolutionary,pizzuti2018evolutionary,saha2018exploring}. Finding a suitable scalarization is non-trivial due to the large space of Pareto optimal solutions that may need to be explored~\cite{handl2007evolutionary,WZMaaai15,zhou2011multiobjective}. Many algorithms employ a heuristic approach to prune the space of Pareto optimal solutions, making it difficult to provide any theoretical guarantees on the results. When combining solutions from solving multiple objectives in isolation, it is not clear how the solution to one objective affects another objective since the clusters formed may be arbitrarily worse with respect to other objectives. For the carpooling example, clusters formed by optimizing independently for $o_1$ (using~\citeauthor{kcenter2approx}~(\citeyear{kcenter2approx})) and $MOC \{o_1,o_2\}$ (using~\citeauthor{balancedkc}~(\citeyear{balancedkc})) are shown in Figures~\ref{ex-o1} and~\ref{ex-moc}, and the corresponding objective values are tabulated in Figure~\ref{tab:objectives}. In single objective clustering, the solution is far from optimal for $o_2$. The distance traveled is much larger with MOC since it optimizes both the objectives together. 

We address these concerns by considering a lexicographic ordering over objectives, which offers a natural way to describe optimization problems~\cite{rangcheng2001discounted,WZMaaai15}. This is motivated by the observation that many multi-objective problems are characterized by an inherent lexicographic ordering over objectives, which offers a principled approach to evaluate candidate solutions. In fact, for the carpooling scenario, clusters that optimize for \emph{both} $o_1$ and $o_2$ in the order $o_1\!>\!o_2$ (Figure~\ref{ex-lmoc}) achieve the best trade-off between the two objectives.

%

We introduce \emph{Relaxed Lexicographic Multi-Objective Clustering} (RLMOC), a generalized model that supports clustering with any finite number of objectives and is characterized by a slack variable. The lexicographic ordering enforces a preference over objectives that are satisfied by the solution. Strict lexicographic ordering often reduces the space for forming clusters that satisfy lower-priority objectives, which we alleviate by using a slack.
The slack, for each objective, is a multiplicative approximation factor denoting the upper limit on the acceptable loss in the solution quality from the optimal, thus offering more flexibility for clustering. For example, in clustering for supply-demand matching, there is always a trade-off among optimizing for distance, load balance, and cost. When optimizing the distance is less critical, allowing for a slack helps improve load balancing and cost. RLMOC generalizes (lexicographic) multi-objective clustering and single-objective clustering, since their solutions can be achieved by appropriately modifying the slack. 

We propose the Zeus algorithm that solves (relaxed) LMOC problem by sequentially processing the different objectives to form clusters. This is facilitated by a \emph{makeshift} subroutine that processes the clusters formed by the previous objective so as to improve the clustering with respect to the current objective, as long as the loss in solution quality does not violate the slack. By varying the list of objectives, their ordering, and the allowed slack, a wide range of problems can be efficiently represented by our model. In this paper, we discuss in detail the makeshifts for resource sharing, fairness, team formation, and K-center objectives, and analyze their theoretical guarantees.

Our primary contributions are: (i) introducing the lexicographic multi-objective clustering with slack (Section 2); (ii) presenting Zeus algorithm, makeshift functions for solving various classes of problems, and analyzing their theoretical guarantees (Section 3); and (iv) empirical results on three domains with different combinations of objectives (Section 4).

\section{Problem Formulation}
Consider a collection of $n$ points $V\!=\{v_1,v_2,\ldots,v_n\}$, along with a pairwise distance metric $d:V\times V\!\rightarrow\!\mathbb{R}$. Let $H\!=\!G(V,E,d)$ be a graph with $E\subseteq V \times V$ capturing pairwise relationships such as `do $u,v\in V$ know each other?'. Let $\chi$ denote a function that maps each node to the different set of attributes\footnote{$H$ comprises of various attributes like $E$, $d$ and $\chi$. Only the attributes optimized by the objectives are required to be known.}. Given a graph instance $H$ and an integer $k$, the goal is to construct clusters $\mathcal{C}\!=\!\{C_1,C_2,\ldots, C_k\}$ that  partition $V$ into $k$ disjoint subsets by optimizing an objective function. Given a graph $H$ and a set of clusters $\mathcal{C}$, the objective function ($o$) returns an objective value as a real number, $o(H,\mathcal{C})\rightarrow \mathbb{R}$, which helps compare different clustering techniques. $C(u)$ denotes the cluster corresponding to $u\!\in\!V$. 

Our work focuses on using a lexicographic collection of these objectives to optimize the set of clusters obtained. Given an ordered set of objectives $\mathcal{O}\!=\!\langle o_1,o_2,\ldots,o_r\rangle$, the lexicographic preference enforces the following priority: $o_1\!>\!o_2\!>\!\ldots\!>o_r$. We now define a mechanism to compare two different sets of clusters to identify a lexicographically superior set of clusters and use it to define the Lexicographic Multi-Objective Objective Clustering (LMOC).

\begin{definition}[Lexicographically Superior]
Given two sets of clusters $\mathcal{C}_1$ and $\mathcal{C}_2$ that optimize for a lexicographically ordered set of objectives $\mathcal{O}\!=\!\langle o_1,o_2,\ldots o_r\rangle$ over a graph $H$, 
$\mathcal{C}_1$ is 
lexicographically superior to $\mathcal{C}_2$ ($\mathcal{C}_1 > \mathcal{C}_2$) if there exists $0\leq t\leq r$ such that  $o_j(H,\mathcal{C}_1)=o_j(H,\mathcal{C}_2)$, $\forall 0<j< t$ and $o_t(H,\mathcal{C}_1) > o_t(H,\mathcal{C}_2)$ whenever $o_t$ is a maximization objective (and the opposite if $o_t$ is a minimization objective).
\end{definition}

\begin{definition}[Lexicographic Multi-Objective Clustering: LMOC($H,k,\mathcal{O}$)]
Given a graph $H=G(V,E,d)$, an ordered set of objectives $\mathcal{O}=\langle o_1,o_2,\ldots,o_r\rangle$, and an integer $k$, the goal is to find a set of k-clusters $\mathcal{C}=\{C_1,C_2,\ldots,C_k\}$ such that there does not exist any other set of k-clusters which are lexicographically superior to $\mathcal{C}$.
\end{definition}
The LMOC generalizes single objective clustering problem and satisfies the following properties:
\begin{itemize}
	\item Optimizing the same objective multiple times is equivalent to optimizing for the same objective once,  $LMOC(H,k,\langle o_1,o_2,o_1\rangle) = LMOC(H,k,\langle o_1,o_2\rangle)$;
	\item The objective value of optimal clusters returned by  $LMOC(H,k,\langle o_1,o_2\rangle)$  for $o_2$ is not less than the objective value of optimal clusters for single objective problem that minimizes $o_2$.
	\item The  clusters returned by LMOC are sensitive to the order in which objective functions are considered, $LMOC(H,k,\langle o_1,o_2\rangle) \neq LMOC(H,k,\langle o_2,o_1\rangle)$; and
	\item$LMOC(H,k,\langle o_1,-o_1\rangle) = LMOC(H,k,\langle o_1\rangle)$.
\end{itemize}

Given the complexity of identifying a global superior set of clusters, we define Relaxed Lexicographic Multi-Objective Clustering problem (RLMOC) that is characterized by an ordered set of \emph{slack} values, ${\Delta}\!=\!\langle\delta_1,\delta_2,\ldots,\delta_r\rangle$ with $\delta_i\!\geq\!0$. The slack values denote the multiplicative approximation factor corresponding to each objective in the lexicographic order. Therefore, a solution $\mathcal{C}$ is considered to be valid iff $o_j(H,\mathcal{C})\geq \delta_j \times o_j(H,\mathcal{C}^*)$, $\forall j\le r$ when $o_j$ is a maximization objective (and  $o_j(H,\mathcal{C})\leq \delta_j o_j(H,\mathcal{C}^*)$ if $o_j$ is a minimization objective), where $\mathcal{C}^*$ is a globally lexicographically superior set of clusters. The goal is to return any single set of clusters from the space of possible solutions. Every RLMOC is an LMOC when $\delta_i\!=\!1, 1\!\leq i \leq r$ and the properties described for LMOC can be translated for the relaxed version as well. Given the NP-hardness of identifying optimal clusters (globally superior) with respect to classical objectives (k-center, k-median, k-means), higher values of $\delta_i$ enable trading solution quality for faster computations. In the following section, we describe an approach to solve RLMOC.

\section{Solution Approach}
In this section, we present the Zeus algorithm to solve the RLMOC problem. Given $H$, the algorithm initializes each node to be present in its own separate cluster and sequentially processes the objective functions (Algorithm~\ref{alg:zeus}). This is achieved by employing a \emph{makeshift} subroutine that processes the previously formed clusters to satisfy the current objective. In each application of the makeshift, the goal is to obtain a set of clusters that do not violate the slack values of any of the processed objectives. The $\texttt{slack\_violated}$ function calculates the objective value of the clustering and estimates if the corresponding slack is violated\footnote{Since calculating the optimal objective value can be NP-hard, theoretical guarantees are leveraged to estimate the optimal value.}. When any of the slack values are violated by the makeshift, the clusters are post-processed using a local search algorithm to improve the violated objective function, without affecting the quality with respect to other objectives. The \texttt{local\_search} function aims to improve the solution by moving one node at a time from its original cluster to any other cluster and terminates when a solution that does not violate the slack is found or when any movement of the nodes does not result in an improvement.
\begin{algorithm}[t]
	\begin{algorithmic}[1]
		\State  $\mathcal{C}\leftarrow $Initialize each node in a separate cluster
		\For{$o_i\in \mathcal{O}$}
		\State $\mathcal{C}\leftarrow$ \texttt{makeshift\_o}$_i$ ($H,\mathcal{C}$)		
		\If{\texttt{slack\_violated}($\mathcal{C},o_i,\delta_i$)}
		\State $\mathcal{C}\leftarrow$ \texttt{local\_search} ($\mathcal{C},\delta_i,o_i$)
		\EndIf
		\EndFor
		\State \Return $\mathcal{C}$;
	\end{algorithmic}
	\caption{\texttt{Zeus}($H,k,\mathcal{O},\Delta$)  \label{alg:zeus}}
\end{algorithm}
Zeus supports any combination of objectives since the makeshift is independent of the sequence of objectives considered. 

\subsection{\emph{Makeshift}\label{sec:makeshift}}
The makeshift is a critical component of Zeus in solving RLMOC. Since the makeshift modifies the clusters formed using previously processed objective to satisfy the current objective, they are naturally dependent on the objective function for efficiency. It is relatively easier to design makeshifts for the \emph{classical} clustering objectives such as K-center, K-median, and K-means. When using a combination of classical and ancillary clustering objectives, the makeshift for the ancillary objective depends on the classical clustering objective as well. In the rest of the paper, we focus on three ancillary objectives that are widely used in real-world applications that benefit from multiple objectives~\cite{zhou2011multiobjective}, each in combination with the K-center objective. A brief description of how our makeshifts can be adapted for other classical clustering objectives is discussed in the appendix.

\subsubsection{k-Center}
It is one of the most widely studied objectives in the literature~\cite{kcenter2approx}, where the goal is to identify $k$ nodes as cluster centers (say $S$, $|S|=k$) and assign each node to the closest cluster center such that the maximum distance of any node from its cluster center is minimized. The objective value is calculated as:
\[o_{kC}(H,\mathcal{C}) = \max_{u\in V} \min_{v\in S} d(u,v).\]

A simple greedy algorithm provides a 2-approximation for the k-center problem and it is NP-hard to find a better approximation factor~\cite{kcenter2approx}. The greedy algorithm initializes each point to be in its own cluster and chooses the first center randomly. In each subsequent iteration, all nodes are assigned to the already identified centers. The node which is farthest from the currently assigned center is selected as the new cluster center. The makeshift algorithm for k-center leverages this to identify the cluster centers. Whenever the input to k-center is a collection of clusters formed by previously processed objectives, the makeshift post-processes these clusters by reassigning nodes such that the set of nodes which were clustered together before processing k-center belong to the same cluster.

\subsubsection{Resource Sharing (RS)}
The objective in resource sharing, $o_{RS}$, is to maximize the number of nodes that have at least one of its \emph{neighbors} in the same cluster. The objective value is calculated as:
\[o_{RS}(H,\mathcal{C}) = \frac{|\{u:\exists v\in C(u), (u,v)\in E\}|}{|V|}.\]
Clustering for RS is widely used in distributed computing and cache management where each compute node in the network is assigned to one of the caches~\cite{motivationrs}. 

For the sake of clarity, we introduce the makeshift (Algorithm~\ref{alg:edgecover}) assuming each node is in its own cluster but this can be modified to work with situations when there are multiple nodes clustered together. The makeshift first iterates over $V$ and constructs a subgraph $H'$ by considering the minimum weight edge for every node (Lines 2-5). The edges $E'$ are a valid edge-cover of the graph $H$ (Lemma~\ref{lem:optimaledge}). The edges are sorted in decreasing order of their weights and the redundant edges are then removed, while ensuring that $E'$ is a valid edge-cover. The set of nodes that belong to the same connected component in $H'$ are considered to belong to the same cluster (Line 12). Hence the clusters generated by Algorithm~\ref{alg:edgecover} are star-shaped with the star centers acting as cluster centers and the maximum length of any path in $H'$ is two (Lemma~\ref{lem:star}). Algorithm~\ref{alg:edgecover} is highly efficient with a run time polynomial in $\vert E\vert$, and hence $O(\vert V\vert^2)$ in the worst case.

\begin{algorithm}[t]
	\begin{algorithmic}
		\State  $E'\leftarrow \phi$, $H'\leftarrow (V,E')$
		\For{$u \in V$}
		\State $v\leftarrow argmin_{v\in V}\{d(u,v) \mid  (u,v)\in E\}$
		\State $E'\leftarrow E'\cup \{(u,v)\}$
		\EndFor
		\State $E' \leftarrow sortDescending(E')$
		\For{$e\in  E'$}
		\If{$E'\setminus \{e\}$ is a valid edge cover}
		\State $E'\leftarrow E'\setminus\{e\}$
		\EndIf
		\EndFor
		\State $\mathcal{C}\leftarrow connected\_components(H')$
		\State \Return $\mathcal{C}$;
	\end{algorithmic}
	\caption{\texttt{makeshift\_RS}$(H,\mathcal{C}$)  \label{alg:edgecover}}
\end{algorithm}
\begin{lemma} Algorithm~\ref{alg:edgecover} produces optimal edge cover that minimizes the maximum weight of any edge in $E'$.\label{lem:optimaledge}
\end{lemma}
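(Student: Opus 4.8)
My plan is to show two things: (i) the edge set $E'$ returned by Algorithm~\ref{alg:edgecover} is a valid edge cover of $H$, and (ii) among all valid edge covers, $E'$ minimizes the maximum edge weight. For (i), I would argue that after the initial loop (Lines 2--5) every vertex $u$ is incident to the edge $(u, \arg\min_{v}\{d(u,v)\})$, so the initial $E'$ covers every vertex; the pruning loop only removes an edge $e$ when the explicit check confirms that $E'\setminus\{e\}$ is still a valid edge cover, so the invariant ``$E'$ is a valid edge cover'' is maintained throughout, and in particular holds at termination.

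For (ii), the key observation is that the initial $E'$ consists of each vertex's minimum-weight incident edge, so its maximum weight is $W := \max_{u\in V}\min_{v:(u,v)\in E} d(u,v)$. Pruning only removes edges, so the final $E'$ has maximum weight at most $W$. It therefore suffices to show that \emph{every} valid edge cover $F$ of $H$ has $\max_{e\in F} w(e) \ge W$. Let $u^\star$ be a vertex achieving the outer maximum in the definition of $W$, i.e.\ $\min_{v:(u^\star,v)\in E} d(u^\star,v) = W$. Since $F$ is an edge cover, some edge $e\in F$ is incident to $u^\star$; but every edge incident to $u^\star$ has weight at least $\min_{v:(u^\star,v)\in E} d(u^\star,v) = W$, so $w(e)\ge W$. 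Hence $\max_{e\in F} w(e)\ge W$, which combined with the upper bound gives that $E'$ is optimal for the min--max objective. (If $H$ has an isolated vertex then no edge cover exists and the claim is vacuous; otherwise the initial loop is well-defined.)

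I would present step (ii) as the heart of the argument, though it is short; the main thing to be careful about is step (i) --- specifically, confirming that the greedy descending-order pruning never destroys the edge-cover property (which is immediate from the guard on Line~8) and, if one wants a cleaner statement, observing that pruning in decreasing weight order means the surviving heavy edges are exactly those that are \emph{forced}: an edge $e$ of weight $w(e)$ survives only if removing it would leave some endpoint uncovered by the remaining (lighter-or-equal, plus already-removed heavier) edges. This is not needed for the min--max optimality claim, since the weight bound $W$ already caps everything, but it clarifies why the algorithm does not merely produce \emph{some} cover of weight $\le W$ but the natural one. The only genuine subtlety, and the part I would double-check, is the edge case of isolated vertices / vertices with no incident edge, where $\arg\min$ on Line~3 is undefined and no edge cover exists; I would either assume $H$ has minimum degree at least one or note that the lemma is vacuous in that case.
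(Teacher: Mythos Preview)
Your proof is correct and follows essentially the same approach as the paper: establish that $E'$ is a valid edge cover via the construction-plus-invariant argument, then show optimality by observing that any edge cover must contain an edge of weight at least $W=\max_{u}\min_{v:(u,v)\in E} d(u,v)$, which also upper-bounds the maximum weight in the final $E'$. The paper phrases the optimality step as a proof by contradiction rather than via your explicit two-sided bound on $W$, but the underlying content is identical; your direct formulation is arguably cleaner, and your remark about isolated vertices is a correct caveat the paper omits.
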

\begin{proof}
	To show that Algorithm~\ref{alg:edgecover} produces optimal edge cover that minimizes the maximum weight edge, we first show that it produces a valid edge cover. The algorithm constructs $E'$ with the minimum weight edge for all $v\in V$ (Lines 3-5) and any redundant edge, which does not violate the edge cover condition, is removed (Line 6). Thus, ${E}'$ is a valid edge cover. We now prove by contradiction that ${E}'$ is optimal. Let ${E}^*$ be an edge cover with a smaller value of the maximum weight edge. Hence, there exists an edge $e\!=\!(u,v)\!\in\!{E^*}$ with $d(u,v)\!<\!d(u',v'), \forall (u',v')\!\in {E}'$. However, for the node $u$, the algorithm selected $(u,v')$ for the edge cover (Lines 3-6), which we have shown to be a valid edge cover. Hence, $d(u,v)\!\ge\! d(u,v')$ is a contradiction, proving that Algorithm~\ref{alg:edgecover} produces optimal maximum edge cover.
\end{proof}
\begin{lemma}
The maximum length of any path in $H'(V,E')$, formed by Algorithm \ref{alg:edgecover}, is two.\label{lem:star}
\end{lemma}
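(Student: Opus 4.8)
The plan is to argue by contradiction, exploiting the structure of the pruning loop (Lines 7--11) rather than the geometry of the weights. Suppose $H'=(V,E')$ contains a path of length at least $3$, and pick four consecutive vertices $a-b-c-d$ along it, so that $(a,b),(b,c),(c,d)\in E'$ with $a\neq c$ and $b\neq d$. The goal is to show that the middle edge $(b,c)$ would necessarily have been deleted by the pruning loop, contradicting $(b,c)\in E'$; since every path of length $\ge 3$ contains such a configuration, this forces the maximum path length in $H'$ to be two.

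The key observation is that the pruning loop only removes edges, never adds them, and tests each edge of the (sorted) list exactly once; hence at the instant an edge $e$ is tested in Line 8, the working set $E'$ contains $e$ together with \emph{every} edge that survives to the final $E'$. Moreover, $e$ survives the loop precisely when, at the instant it is tested, $E'\setminus\{e\}$ is not an edge cover, i.e.\ $e$ is the only edge of the then-current working set incident to one of its endpoints. Now apply this to $e=(b,c)$: since the path $a-b-c-d$ lies in the \emph{final} $H'$, both $(a,b)$ and $(c,d)$ are in the final $E'$, hence are present in the working set when $(b,c)$ is tested. Deleting $(b,c)$ therefore leaves $b$ covered by $(a,b)$, leaves $c$ covered by $(c,d)$, and does not affect the coverage of any other vertex; so $E'\setminus\{(b,c)\}$ is still a valid edge cover and the loop would have removed $(b,c)$ --- a contradiction. (Phrased structurally, this shows no edge of $H'$ joins two vertices of degree $\ge 2$, so each connected component is a star centered at its unique high-degree vertex, as anticipated in the discussion before the lemma.)

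I expect the only delicate point to be the bookkeeping about the state of the working set at the moment a particular edge is tested --- precisely, that edges surviving to the end of the loop are never removed before $(b,c)$ is processed, so they remain available to keep $b$ and $c$ covered. The invariant that $E'$ stays a valid edge cover throughout (Lemma~\ref{lem:optimaledge}), together with the fact that deleting a single edge can only uncover its two endpoints, is what makes this argument go through; notably, the descending sort order of the edges plays no role in the proof of this particular lemma.
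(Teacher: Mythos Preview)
Your argument is correct and follows essentially the same route as the paper: assume a path of length at least three, observe that the interior edge $(b,c)$ (the paper's $(u_2,u_3)$) is redundant for the edge-cover property, and derive a contradiction with the pruning loop. Your proof is in fact more careful than the paper's, since you explicitly justify why $(a,b)$ and $(c,d)$ are present in the working set at the moment $(b,c)$ is tested (monotonicity of the loop), a point the paper leaves implicit.
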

\begin{proof}
	We prove by contradiction that the maximum path length in $H'(V,E')$ is two. Upon termination of the algorithm, let there exist a path $P$ with edges: $\{(u_1,u_2),(u_2,u_3),(u_3,u_4),\ldots, (u_t,u_{t+1})\}$ and $t>2$. Removing the edge $(u_2,u_3)$ from $P$ would still result in an edge cover. However, Algorithm~\ref{alg:edgecover} evaluates every edge in $E'$ and only retains the edges if the edge cover is not violated (Lines 9-13). Hence, this is a contradiction as the $E'\setminus \{(u_2,u_3)\}$ cannot be a valid edge cover. Hence, the maximum length of all paths in $H'(V,E')$ is two. 
\end{proof}
The following theorem proves that the set of clusters returned by Algorithm~\ref{alg:edgecover} are optimal with respect to $o_{RS}$ and bounds the solution quality.
\begin{theorem}
	The clustering $\mathcal{C}$ returned by Zeus when $\mathcal{O}=\langle RS,kC\rangle$ has objective values $o_{RS}(H,\mathcal{C})=OPT_{{RS}}$ and $o_{kC}(H,\mathcal{C})=3\ OPT_{\langle {RS},{kC}\rangle}$ in the worst case, where $OPT_o$ is the optimal objective value for the objective function $o$.
	\label{thm:resource}
\end{theorem}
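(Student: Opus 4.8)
The plan is to establish the two equalities separately. For the resource-sharing value, I would first show that \texttt{makeshift\_RS} already attains the largest possible $o_{RS}$. By Lemma~\ref{lem:optimaledge} the set $E'$ is a valid edge cover, so every vertex $u$ is incident to some edge $(u,v)\in E'\subseteq E$; since $u,v$ lie in the same connected component of $H'$ they end up in the same cluster, so $u$ has a neighbour inside $C(u)$, giving $o_{RS}=1$ (when $H$ has no isolated vertex; otherwise run the argument on the non-isolated vertices and still get $o_{RS}=OPT_{RS}$). It then remains to note that the later stages cannot destroy this: the $k$-center makeshift only reassigns vertices while keeping together every pair that was co-clustered before it ran, and \texttt{local\_search} is defined not to degrade already-processed objectives; hence $o_{RS}(H,\mathcal{C})=OPT_{RS}$ for the clustering $\mathcal{C}$ that Zeus returns.

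For the $k$-center value, write $OPT^* := OPT_{\langle RS,kC\rangle}$ and let $M$ be the maximum edge weight of the cover $E'$. The crux is the lower bound $OPT^*\ge M/2$: by Lemma~\ref{lem:optimaledge}, $M$ equals $\max_u \min_{w:(u,w)\in E} d(u,w)$, and in any RS-optimal clustering the vertex $u^*$ realizing this maximum must share its cluster with some graph-neighbour at distance $\ge M$; since both of them are within $OPT^*$ of that cluster's centre, $M\le 2\,OPT^*$. By Lemma~\ref{lem:star} the clusters output by \texttt{makeshift\_RS} are stars whose centre-to-leaf distances are the weights of the retained edges, hence at most $M\le 2\,OPT^*$.

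Next I would view the $k$-center makeshift as collapsing each star to its centre, running the greedy $2$-approximation for $k$-center on this contracted instance, and assigning each star to the selected centre nearest its star-centre; the final $k$-center radius is then at most the star radius plus the greedy assignment distance. Bounding the first term by $M\le 2\,OPT^*$ and the second by relating the contracted-instance optimum to $\mathcal{C}^*$ (restricting $\mathcal{C}^*$ to the set of star-centres), and combining the two estimates, should yield $o_{kC}(H,\mathcal{C})\le 3\,OPT^*$. To justify ``in the worst case'' I would then exhibit a small instance --- a handful of points pairwise far in the metric but joined by edges that force them into common clusters --- on which Zeus is driven to $k$-center value exactly $3\,OPT^*$.

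The delicate step is the combination in the third paragraph: making the two unavoidable error contributions --- the star radius that the RS constraint forces up to $2\,OPT^*$, and the approximation lost when re-clustering the contracted instance into $k$ groups --- collapse to exactly $3\,OPT^*$ rather than a larger constant. This needs the sharp inequality $OPT^*\ge M/2$ used \emph{jointly} with the greedy bound, a careful choice of which centres the makeshift commits to, and bookkeeping that is robust to the order in which stars are merged and to the corner case in which \texttt{makeshift\_RS} produces fewer than $k$ components.
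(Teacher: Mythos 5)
Your first half (the claim $o_{RS}(H,\mathcal{C})=OPT_{RS}$) is essentially the paper's own argument: $E'$ is an edge cover, covered pairs stay co-clustered through the k-center makeshift, so the RS value is optimal. The k-center half, however, has a genuine gap, and it is exactly the step you yourself flag as ``delicate.'' The two bounds you actually establish are (i) star radius $\le M \le 2\,OPT_{\langle RS,kC\rangle}$ (your inequality $OPT^*\ge M/2$, obtained because an RS-optimal clustering co-clusters $u^*$ with a neighbour and both lie within $OPT^*$ of a common centre) and (ii) a greedy/contracted-instance assignment distance of at most $2\,OPT_{kC}\le 2\,OPT_{\langle RS,kC\rangle}$. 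Added by the triangle inequality these give $4\,OPT_{\langle RS,kC\rangle}$, not $3$. The paper's proof gets $3$ precisely because it uses a stronger bound on the cover edges: invoking Lemma~\ref{lem:optimaledge} it asserts $d(e)\le OPT_{\langle RS,kC\rangle}$ for every $e\in E'$ (the leaf-to-star-centre distance is at most the optimal radius itself, not twice it), and then combines (a) $\le OPT_{\mathcal{O}}$ with (b) $\le 2\,OPT_{\mathcal{O}}$ for the star-centre-to-cluster-centre distance. Your plan never proves $M\le OPT_{\langle RS,kC\rangle}$, and no amount of careful bookkeeping about merge order or fewer-than-$k$ components will convert the pair $(2OPT^*,2OPT^*)$ into the constant $3$; you either need the paper's sharper per-edge bound or you end with a $4$-approximation claim, which is a different theorem.

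Two smaller points. First, the theorem's ``in the worst case'' is, in the paper, just the upper bound on $o_{kC}$; no tight instance is constructed, so the lower-bound example you promise is unnecessary for the statement (though harmless). Second, your description of the k-center makeshift as contracting stars and running greedy on the contracted instance is a reasonable reading, but the quantity you then need is the distance from a star centre to its selected cluster centre in the original metric, which the paper bounds directly by $2\,OPT_{kC}$ via the greedy guarantee together with the LMOC property $OPT_{kC}\le OPT_{\langle RS,kC\rangle}$; relating a ``contracted-instance optimum'' to $\mathcal{C}^*$, as you propose, is an extra reduction you would still have to carry out and is not needed once the per-edge bound above is in hand.
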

\begin{proof}
	Let $\mathcal{C}=\{C_1,C_2,\ldots, C_k\}$ be the clusters returned by Zeus with $\mathcal{O}=\langle RS,kC\rangle$ and $\mathcal{C}'$ be the clusters generated on processing RS (and $E'$ be the edges in the edge cover). Since $E'$ is the optimal edge cover (by Lemma~\ref{lem:optimaledge}) and the pair of nodes that are connected in the edge cover are present in the same final cluster, the objective value of $\mathcal{C}$ with respect to $RS$ is the optimal objective value. 

Lemma~\ref{lem:optimaledge} shows that the weight of any edge $e\in E'$ is less than the maximum weight of an edge in the optimal solution. Hence, $d(e)\leq OPT_{\mathcal{O}}$, $\forall e\in E'$ and Lemma~\ref{lem:star} guarantees that the clusters generated by RS are star-shaped. When Zeus processes k-center objective, the the distance between any pair of points is less than $2OPT_{kC}$~\cite{kcenter2approx}. Using the second property of LMOC, we know that $2OPT_{kC}\leq 2OPT_{\mathcal{O}}$. It then iterates over the leaves in the stars of $\mathcal{C}'$  and  assigns them to the same cluster as that of the center of its star in $\mathcal{C}$. Using triangle inequality, the distance of these leaves from the center of their new cluster is less than the sum of `distance between  the leaf and center of the star (say a)' and  `distance between center of the star and its cluster center (say b)'. Since (a) $\leq OPT_{\mathcal{O}}$ and (b) $\leq 2OPT_{\mathcal{O}}$. Hence, the maximum distance of any point from its cluster center is less than $3OPT_{\mathcal{O}}$.
\end{proof}

This shows that the slack is not violated for $\mathcal{O}=\langle RS,kC \rangle$ whenever $\delta_{RS}\le 1$ and $\delta_{kC}\ge 3$. Additionally,  $\delta_{RS}>1$ is infeasible as the objective value can never be greater than the optimal value and $\delta_{kC}< 2$ is also infeasible due to NP-hardness 
of approximating k-center problem. Hence, the slack is violated only when $2\leq \delta_{kC}\leq 3$ in which case the local search technique helps improve the solution.

\subsubsection{Fairness (F)}
Minimizing bias to improve fairness is gaining increased attention as it is critical for many real-world settings. However, fairness in clustering remains under-explored. Let each node in the graph $H$ have a sensitive attribute, say color which can `Blue' (B) or `Purple' (P). Given such a characteristic, recent work has focused on forming clusters such that every cluster has equal fraction of nodes with `Blue' attribute~\cite{fairlet}.
This is a form of group fairness, studied in the literature~\cite{fairnesssg}. Another setting studied in the literature considers individual fairness, where two nodes possessing similar attributes but different color should not be discriminated~\cite{fairnesssg}. 
We consider a fairness objective, $o_F$, which ensures that each node from the minority group (say `Blue') is matched to at least one \emph{neighbor} from the majority group (`Purple') and all the matched pairs (denoted by $E'$) belong to the same cluster. The objective value is calculated as:
\[o_{F}(H,\mathcal{C}) = \frac{|\{u:\!u\!\in\!B, (u,v)\!\in\!E', v\in C(u)\cap P\}|}{|B|}.\]
\begin{algorithm}[t]
	\begin{algorithmic}[1]
		\For{$r^* =$ binary search in $[0,max\{d(u,v)\}]$}
		\State $E'\leftarrow \phi$
		\For{$(u,v) \in E$}
		\If{$d(u,v)\leq r^*$}
		\State $E'\leftarrow E'\cup(u,v)$
		\EndIf
		\EndFor
		\State $\mathcal{C}\leftarrow \texttt{Matching(G(V,E'))}$
		\EndFor
		\State \Return $\mathcal{C}, E'$;
	\end{algorithmic}
	\caption{\texttt{makeshift\_F}($H,\mathcal{C}$) \label{alg:fairness}}
\end{algorithm}

Algorithm~\ref{alg:fairness} describes the mechanism to match the nodes with `Blue' color to the nodes with `Purple' color. Let $r^*$ denote the optimal value of the maximum distance between any pair of matched vertices. The optimal distance is initialized to the maximum distance between any pair of vertices, $r^* = \max_{(u,v)\in E} \{d(u,v)\}$, and is refined iteratively. The algorithm constructs an unweighted bipartite graph with `Blue' nodes on one side ($B$) and `Purple' on the other side ($P$) and a pair is connected if the corresponding edge distance is less than $r^*$. It then performs a maximum bipartite matching by adding a source node, $s$, and a sink node, $t$. The nodes in $B$ are connected to $s$ and the nodes in $P$ are connected to $t$, with an edge of unit capacity.
Executing bipartite matching on this instance guarantees a that every node in $B$ is matched with some node in $P$. In each subsequent iteration, $r^*$ is updated by performing binary search, which helps quickly identify the smallest $r^*$ that guarantees finding an optimal matching. 

\begin{theorem}
The clustering $\mathcal{C}$ returned by Zeus when $\mathcal{O}=\langle F,kC\rangle$ has objective values $o_{F}(H,\mathcal{C})=OPT_{{F}}$ and $o_{kC}(H,\mathcal{C})=3\ OPT_{\langle {F},{kC}\rangle}$ in the worst case, where $OPT_o$ is the optimal objective value for the objective function $o$.\label{thm:fairness}
\end{theorem}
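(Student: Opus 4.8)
The plan is to follow the blueprint of the proof of Theorem~\ref{thm:resource}, since the two statements are structurally identical, with the optimal edge cover of Algorithm~\ref{alg:edgecover} replaced by the optimal matching produced by Algorithm~\ref{alg:fairness}. The proof splits into two claims: (i) $o_F(H,\mathcal{C})=OPT_F$, and (ii) $o_{kC}(H,\mathcal{C})\le 3\,OPT_{\langle F,kC\rangle}$. I would also first record the two structural facts about Algorithm~\ref{alg:fairness} that play the role of Lemmas~\ref{lem:optimaledge} and~\ref{lem:star}: the binary search returns the smallest threshold $r^*$ for which a $B$-saturating matching exists, so the matching $E'$ it outputs \emph{minimizes the maximum distance over matched pairs}; and since $E'$ is a matching, every connected component of $G(V,E')$ is a single edge, i.e.\ a star of depth one, so the ``leaf-to-center'' distance in these stars is simply the matched-pair distance.

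For claim (i), Algorithm~\ref{alg:fairness} performs a maximum bipartite matching between $B$ and $P$ on the thresholded graph, so $E'$ saturates as many minority nodes as any matching can, i.e.\ it realizes $OPT_F$. Because the $k$-center makeshift invoked next by Zeus is defined to reassign nodes so that every set of nodes clustered together before $kC$ is processed remains together, each matched pair $(u,v)\in E'$ ends up in a common final cluster $C$. Hence every matched Blue node $u$ still has its Purple partner in $C(u)\cap P$, so the fairness value is preserved at $OPT_F$ throughout the run.

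For claim (ii), I would first bound the matched-pair distance against $OPT_{\mathcal{O}}$ with $\mathcal{O}=\langle F,kC\rangle$: the lexicographically optimal clustering $\mathcal{C}^*$ is $o_F$-optimal and therefore induces a within-cluster $B$-saturating matching, and since $E'$ is the minimax-distance such matching, its widest matched pair is no wider than that of $\mathcal{C}^*$, which in turn is controlled by the $k$-center radius $OPT_{kC}(\mathcal{C}^*)=OPT_{\mathcal{O}}$ (using the second LMOC property, $OPT_{kC}\le OPT_{\mathcal{O}}$). Then, exactly as in Theorem~\ref{thm:resource}: when Zeus processes $kC$, the greedy $2$-approximation yields a clustering in which any two points lie within $2\,OPT_{kC}\le 2\,OPT_{\mathcal{O}}$; the makeshift next moves each matched Blue node $u$ into the cluster of its partner $v$; and by the triangle inequality its distance to the new cluster center is at most $d(u,v)+d(v,\text{center})\le OPT_{\mathcal{O}}+2\,OPT_{\mathcal{O}}=3\,OPT_{\mathcal{O}}$, which is the stated worst-case bound.

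The hard part will be the bound $d(u,v)\le OPT_{\mathcal{O}}$ for $(u,v)\in E'$: a naive argument only gives the cluster \emph{diameter} $2\,OPT_{kC}$ rather than a single $k$-center radius, leaving a factor-$2$ gap, so one must either exploit structure of $\mathcal{C}^*$ (for instance choosing, for each saturated Blue node, a matched Purple node lying within one radius of a center) or settle for a weaker constant; this is the same delicate point that is implicit in the proof of Theorem~\ref{thm:resource}. A secondary obstacle is the degenerate case in which some minority node has no majority neighbor within any feasible radius, so that $OPT_F<1$: there claim (i) must be phrased in terms of maximum matchings rather than perfect matchings, and one must check that the binary search in Algorithm~\ref{alg:fairness} still terminates with a matching of size $OPT_F\cdot|B|$.
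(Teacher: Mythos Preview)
Your proposal is correct and follows essentially the same route as the paper's own proof: argue that the binary search in Algorithm~\ref{alg:fairness} yields a maximum matching $E'$ with minimax edge weight, so $o_F$ is optimal and $d(e)\le OPT_{\mathcal{O}}$ for every $e\in E'$; then invoke the $2$-approximation for $k$-center together with the second LMOC property, and finish with the triangle inequality to obtain the $3\,OPT_{\mathcal{O}}$ bound. The ``hard part'' you flag---whether one really gets $d(u,v)\le OPT_{\mathcal{O}}$ rather than only the diameter bound $2\,OPT_{\mathcal{O}}$---is handled in the paper exactly as in Theorem~\ref{thm:resource}: the paper simply asserts $d(e)\le OPT_{\mathcal{O}}$ without addressing the radius-versus-diameter gap, so your caution is well-placed but is not a deviation from the paper's argument.
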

\begin{proof}
	Let $\mathcal{C}=\{C_1,C_2,\ldots, C_k\}$ be the clusters returned by Zeus with $\mathcal{O}=\langle F,kC\rangle$ and $\mathcal{C}'$ be the clusters generated on processing F (and $E'$ be the edges in the matching returned). Since $E'$ is the optimal maximum matching  and the pair of nodes that are connected in the matching are present in the same final cluster, the objective value of $\mathcal{C}$ with respect to $F$ is the optimal objective value. 
	
	Since we perform a binary search to identify the smallest value of $r^*$ that returns the maximum matching, the weight of any edge $e\in E'$ is not greater than the maximum weight of an edge in the optimal solution. Hence, $d(e)\leq OPT_{\mathcal{O}}$, $\forall e\in E'$. When Zeus processes k-center objective, the distance between any pair of points is less than $2OPT_{kC}$ [Vazirani, 2013]. Using the second property of LMOC, we know that $2OPT_{kC}\leq 2OPT_{\mathcal{O}}$. It then iterates over the nodes in the components of $\mathcal{C}'$ and assigns all matched nodes to the same cluster in $\mathcal{C}$. Using triangle inequality, the distance of these leaves from the center of their new cluster is less than the sum of `distance between the leaf and center of the star (say a)' and  `distance between center of the star and its cluster center (say b)'. Since (a) $\leq OPT_{\mathcal{O}}$ and (b) $\leq 2OPT_{\mathcal{O}}$. Hence, the maximum distance of any point from its cluster center is less than $3OPT_{\mathcal{O}}$.
\end{proof}
This shows that the slack is not violated for $\mathcal{O}=\langle F,kC \rangle$ whenever $\delta_{F}\le 1$ and $\delta_{kC}\ge 3$. Additionally,  $\delta_{F}>1$ is infeasible as the objective value can never be greater than the optimal value and $\delta_{kC}< 2$ is also infeasible due to NP-hardness 
of approximating k-center problem. Hence, the slack is violated only when $2\le \delta_{kC}\leq 3$ in which case the local search technique helps improve the solution. 

\subsubsection{Team Formation (TF)}
This objective is motivated by applications that require forming teams (clusters) such that certain attributes (experts in different fields) are equally represented across all clusters, irrespective of their connectivity with other nodes.
%
Consider a scenario where each node has an attribute $X$ such that $X(u)=True$ denotes that $u$ is an expert in $X$, and a non-expert otherwise. We consider $X$ to be a binary variable but it can be extended to work when there are multiple attributes, each having multiple values. The team formation objective aims to form clusters with equal fraction of experts; each cluster has (approximately) $\frac{|X|}{k}$ nodes from $X$, with $X\!\subseteq\!V$ denoting the experts. The objective value is:

\[o_{TF}(H,\mathcal{C}) = \frac{\max_{C\in \mathcal{C}}|\{u:u\in C,  X(u)=True\}|} {\min_{C\in \mathcal{C}}|\{u:u\in C, X(u)=True\}|}.\]

In order to handle the team formation objective along with k-center objective, Algorithm~\ref{alg:teamformation} first performs constrained k-center on the set of vertices that are experts, $X(u)\!=\!True$. This step ensures that the k clusters generated are of equal size. 
When $X\!=\!V$, this objective is equivalent to generating balanced clusters (example in Figure~\ref{fig:illus-1}), for which the current best solution is a 4-approximation of the clusters on $V$~\cite{balancedkc}. Every node in $V\setminus X$ is assigned to the cluster corresponding to the closest node in $X$. The time complexity of Algorithm~\ref{alg:teamformation} is $O(\vert V \vert k)$.

\begin{theorem}
The clustering $\mathcal{C}$ returned by Zeus when $\mathcal{O}=\langle TF,kC\rangle$ has objective values $o_{TF}(H,\mathcal{C})\!=\!OPT_{{TF}}$ and $o_{kC}(H,\mathcal{C})\!=\!10\  OPT_{\langle {TF},{kC}\rangle}$ in the worst case, where $OPT_o$ is the optimal objective value for the objective function $o$. \label{Thrm:TF}
\end{theorem}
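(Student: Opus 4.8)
The plan is to mirror the structure of Theorems~\ref{thm:resource} and~\ref{thm:fairness}: first argue that the clustering Zeus returns is exactly optimal for the higher-priority objective $TF$, then bound the $k$-center cost of the final clustering by chaining triangle inequalities with the approximation factor of~\cite{balancedkc}. Throughout I will keep the standing assumption $|X|\ge k$, without which $o_{TF}$ is not even finite (some cluster would contain no expert, making the denominator zero).

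For the $TF$ part, observe that Algorithm~\ref{alg:teamformation} runs constrained (balanced) $k$-center on the expert set $X$, which produces $k$ clusters whose expert-counts are as equal as the instance allows (differing by at most one), and the subsequent assignment of every $v\in V\setminus X$ to the cluster of its nearest expert never changes those counts. Since no partition of $V$ can make $\max_{C}|C\cap X|/\min_{C}|C\cap X|$ smaller than the maximally balanced split of $|X|$ into $k$ parts, we get $o_{TF}(H,\mathcal{C})=OPT_{TF}$.

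For the $k$-center bound, write $OPT=OPT_{\langle TF,kC\rangle}$ and let $\mathcal{C}^*$ be a globally lexicographically superior clustering. First I reduce to a clean balanced-$k$-center statement on $X$: restricting $\mathcal{C}^*$ to $X$ yields a balanced partition of $X$ served by centers in $V$ at radius $\le OPT$; replacing each such center by the nearest expert in its cluster (which exists because $|X|\ge k$ and $\mathcal{C}^*$ is balanced on $X$) costs at most one more $OPT$ by the triangle inequality, so the optimal balanced $k$-center on $X$ with centers drawn from $X$ has value $r^*_X\le 2\,OPT$. Applying the $4$-approximation of~\cite{balancedkc} to the induced sub-instance on $X$ then returns centers $S\subseteq X$ with every expert within $4 r^*_X\le 8\,OPT$ of its center. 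Finally, for a non-expert $u$, let $w$ be its nearest expert: $u$ lies in some cluster of $\mathcal{C}^*$ with center $c^*$ at distance $\le OPT$, and that cluster contains an expert $w^*$ with $d(c^*,w^*)\le OPT$, so $d(u,w)\le d(u,w^*)\le 2\,OPT$; hence the distance from $u$ to its assigned center is at most $d(u,w)+8\,OPT\le 10\,OPT$. Combining the expert bound ($\le 8\,OPT$) and the non-expert bound ($\le 10\,OPT$) gives $o_{kC}(H,\mathcal{C})\le 10\,OPT_{\langle TF,kC\rangle}$ in the worst case.

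The main obstacle is the careful accounting of the three independent losses that compound into the factor $10$: the factor $2$ incurred by forcing cluster centers to be experts, the factor $4$ inherited from the balanced $k$-center approximation of~\cite{balancedkc}, and the additive $2\,OPT$ detour from a non-expert to its nearest expert. A secondary point to verify is that the guarantee of~\cite{balancedkc} is genuinely preserved when run on the sub-instance restricted to $X$ (rather than on all of $V$), and that $|X|\ge k$ so that every cluster---both in $\mathcal{C}^*$ and in the computed solution---contains an expert available to serve as, or be close to, its center.
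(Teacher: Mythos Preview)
Your proposal is correct and follows essentially the same approach as the paper's proof: both restrict the lexicographically optimal clustering $\mathcal{C}^*$ to $X$ to bound the optimal balanced $k$-center value on $X$ by $2\,OPT$, invoke the $4$-approximation of~\cite{balancedkc} to get the $8\,OPT$ bound for experts, and then pay an extra $2\,OPT$ for each non-expert by routing through an expert that shares its cluster in $\mathcal{C}^*$. You are in fact slightly more careful than the paper---you make the $|X|\ge k$ assumption explicit, you justify why centers can be taken from $X$, and you actually argue the $o_{TF}(H,\mathcal{C})=OPT_{TF}$ claim rather than leaving it implicit---but the skeleton of the argument is identical.
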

\begin{proof}
First, we try to estimate the optimal distance of points in $X$ from their corresponding centers. Then, we evaluate the distance of points in $V\setminus X$ from their corresponding centers.
	Let $\mathcal{C}^*$ be the optimal set of clusters that optimize $TF$ and $kC$ objectives with the k-center objective value $r^*$. Hence the pairwise distance between any pair of nodes in $X$ that belong to the same cluster is $\le 2r^*$. Restricting $C^*$ to the set of nodes in $X$ is a valid solution with respect to $TF$ objective. Hence, the set of optimal clusters, on the nodes of $X$ will have k-center objective value $O\le 2r^*$. Using latest result for balanced k-center problem, which is 4-approximation~\cite{balancedkc}, we get  $d(u,c(u))\le 4O \le 8r^*,\ \forall u,c(u)\in X$, where $c(u)$ is the center of the cluster corresponding $u$. 
	
	Additionally, $\mathcal{C}^*$ ensures that $\forall u\in V\setminus X$, there exists some node in $v\in X$ having a distance less than $2r^*$. Since Algorithm~\ref{alg:teamformation} identifies the closest node in $X$ (Line 3), $d(u,v)\le 2r^*,\ u\in V \text{ and } v\in X$. Using triangle inequality, the distance of any node $v$ from the corresponding center is less than $d(v,c(v))\leq d(u,v)+d(u,c(u)) = 8r^*+ 2r^* = 10r^*$.
\end{proof}
\begin{algorithm}[t]
	\begin{algorithmic}[1]
		\State $\mathcal{C}\leftarrow $ balanced k-center on $X$ 
		\For{$u\in V$}
		\State $v\leftarrow \arg \min\{d(u,v)\mid v\in X\}$
		\State $C\leftarrow C\cup\{u\} \ \mid v\in C$
		\EndFor
		\State \Return $\mathcal{C}$;
	\end{algorithmic}
	\caption{\texttt{makeshift\_TF} ($H,\mathcal{C}, X\subseteq V$) \label{alg:teamformation}}
\end{algorithm}
This shows that the slack is not violated for $\mathcal{O}=\langle TF,kC \rangle$ whenever $\delta_{TF}\le 1$ and $\delta_{kC}\!\ge\!10$. Furthermore, $\delta_{TF}>1$ is infeasible as the objective value cannot be greater than the optimal value and $\delta_{kC}< 2$ is infeasible as it is NP-hard to get a better approximation~\cite{kcenter2approx}. In general, if an algorithm that guarantees $\alpha-$approximation for equal cluster k-center algorithm can be devised, then Algorithm~\ref{alg:teamformation} is guaranteed to provide an approximation ratio of $2\alpha+2$. Assigning a node $v\in V\setminus X$ to the closest cluster center improved the solution quality empirically, even though it does not alter the theoretical guarantee.  We employ this optimization in our experiments.

\begin{figure*}
	\centering
	\includegraphics[width=7in, height=1.5in]{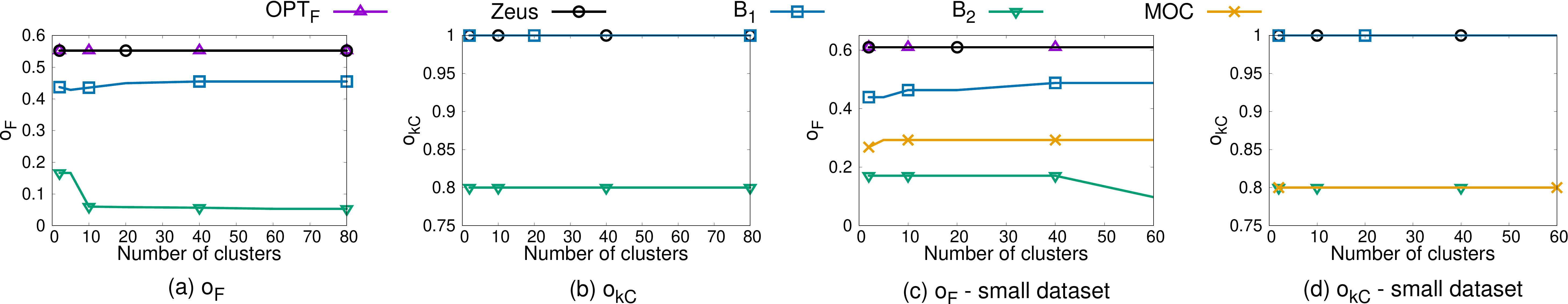}\label{results-tf-1000}
	\caption{Solution quality of various approaches corresponding to fairness and k-center objectives, $\langle F,KC\rangle$.}	\label{fig-fairness}
	\vspace{-7pt}
\end{figure*}

\section{Experimental Results}
We conduct extensive experiments to evaluate Zeus and the proposed makeshifts on three real world datasets with the objectives discussed in the earlier section. Fairness (F), resource sharing (RS), and Team formation (TF) objectives are considered as $o_1$ with K-center (KC) as $o_2$, with the lexicographic order $o_1 > o_2$. The $\langle F, KC\rangle$ objectives are evaluated on the Pokec social network dataset\footnote{https://snap.stanford.edu/data/soc-Pokec.html}. The goal is to form clusters such that for every female in a cluster, there is at least one male neighbor in the same cluster, while optimizing for K-center. The $\langle RS, KC\rangle$ objectives are evaluated on the academic conference dataset\footnote{https://core.ac.uk/services\#dataset}, to identify conferences that can co-occur or be co-located. The $\langle TF, KC\rangle$ objectives are evaluated on the adult dataset\footnote{https://archive.ics.uci.edu/ml/machine-learning-databases/adult/}. The goal is to form teams such that nodes with ``tech-suppport'' attribute are equally distributed across clusters, while optimizing for K-center. For the resource sharing application, the distances are estimated using an embedding in Euclidean space. For fairness and team formation, Jaccard distances are used.

We compare the results produced by Zeus with that of three baselines: ($B_1$) a greedy algorithm that optimizes $o_1$ independently; ($B_2$) optimizing k-center objective, $o_2$ (using~\cite{kcenter2approx}); and (MOC) a greedy approach that optimizes for both the considered objectives, with equal weight to each objective. The results are compared across different value of $k\!\geq\!2$ and different slack values. Unless otherwise specified, all algorithms were implemented by us in Python using the networkx library on a 8GB RAM laptop and the reported results are on 1000 nodes.

 
\begin{figure}[t]
	\centering
	\includegraphics[width=3.2in, height=1.4in]{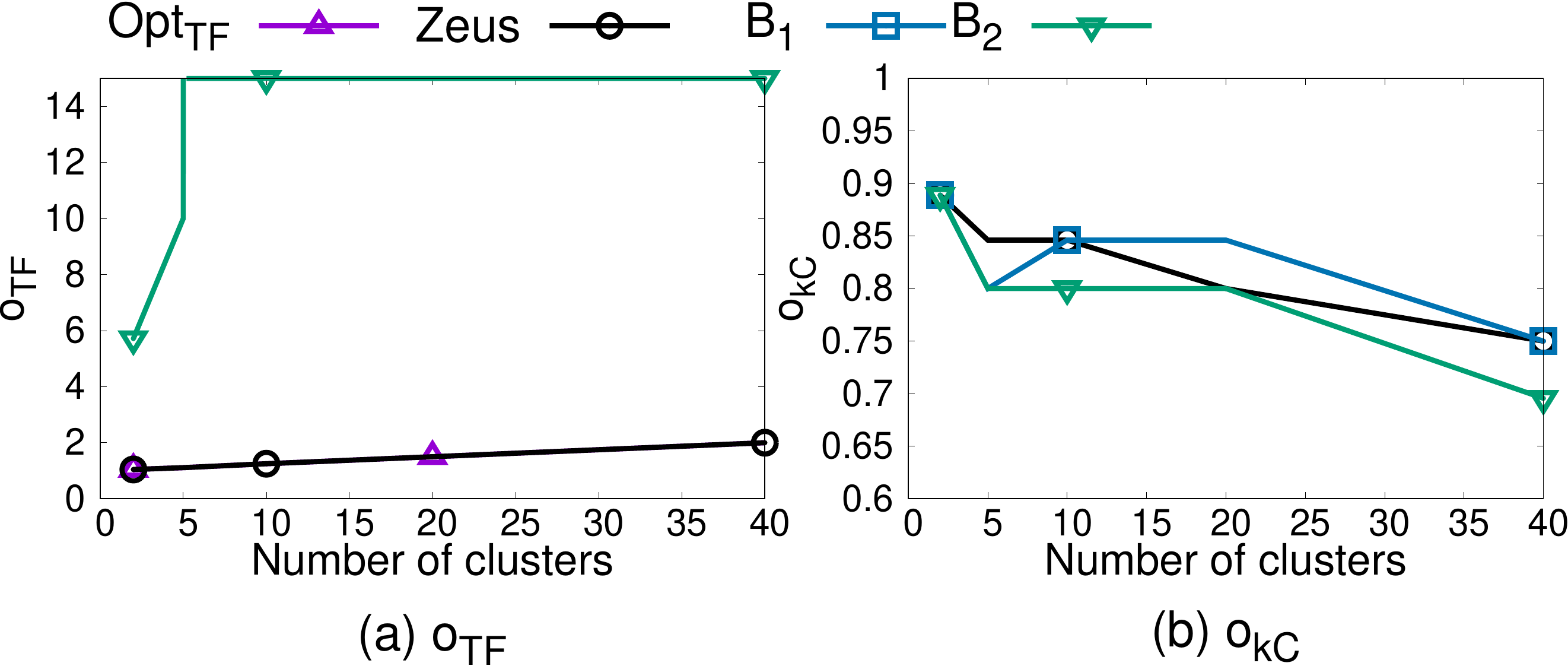}\label{tf_merged}
	\caption{Solution quality of various approaches corresponding to team formation and k-center objectives, $\langle TF,KC\rangle$.}
	\label{fig-teamformation}
\end{figure}

\begin{figure}[t]
	\centering
		\includegraphics[width=3.2in, height=1.4in]{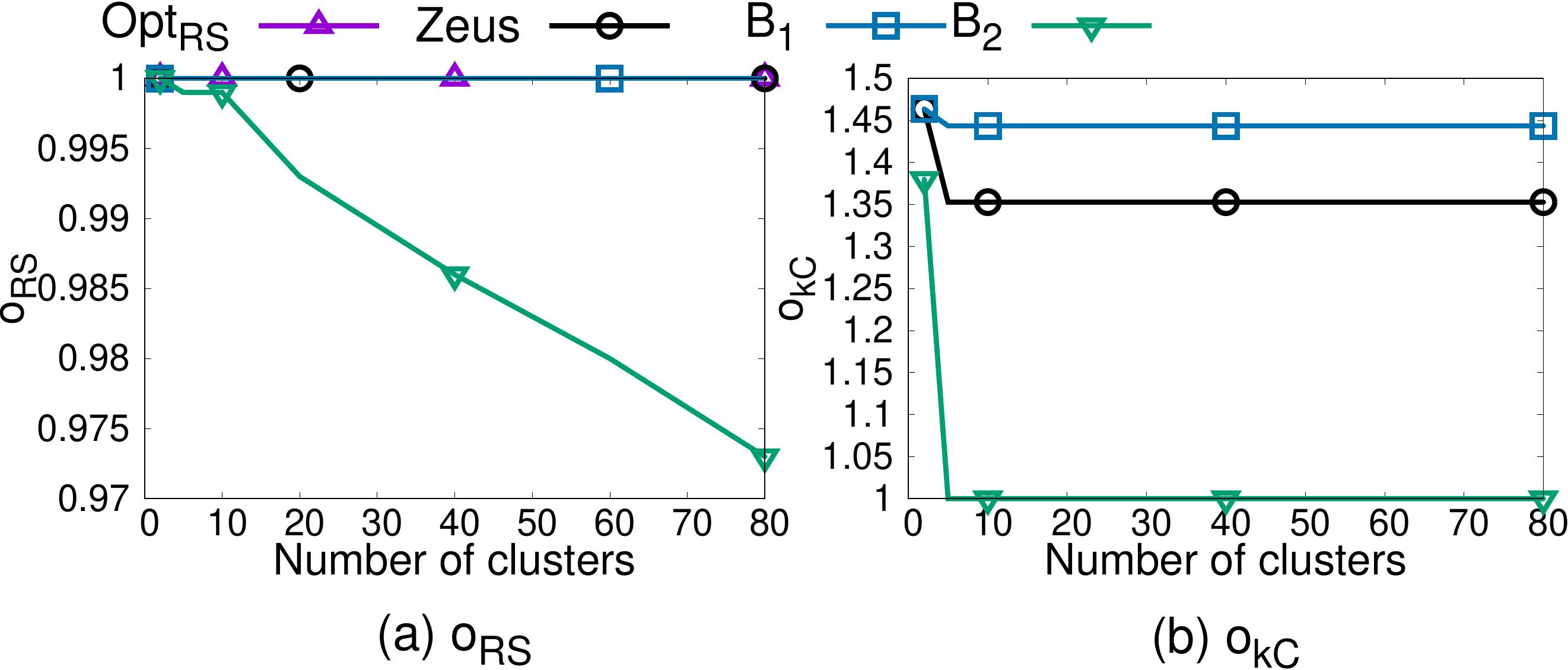}\label{rs_merged}
	\caption{Solution quality of various approaches corresponding to resource sharing and k-center objectives, $\langle RS,KC\rangle$.}
	\label{fig-rs}
\end{figure}
\begin{figure}[t]
	\centering
	\includegraphics[width=3.2in, height=1.4in]{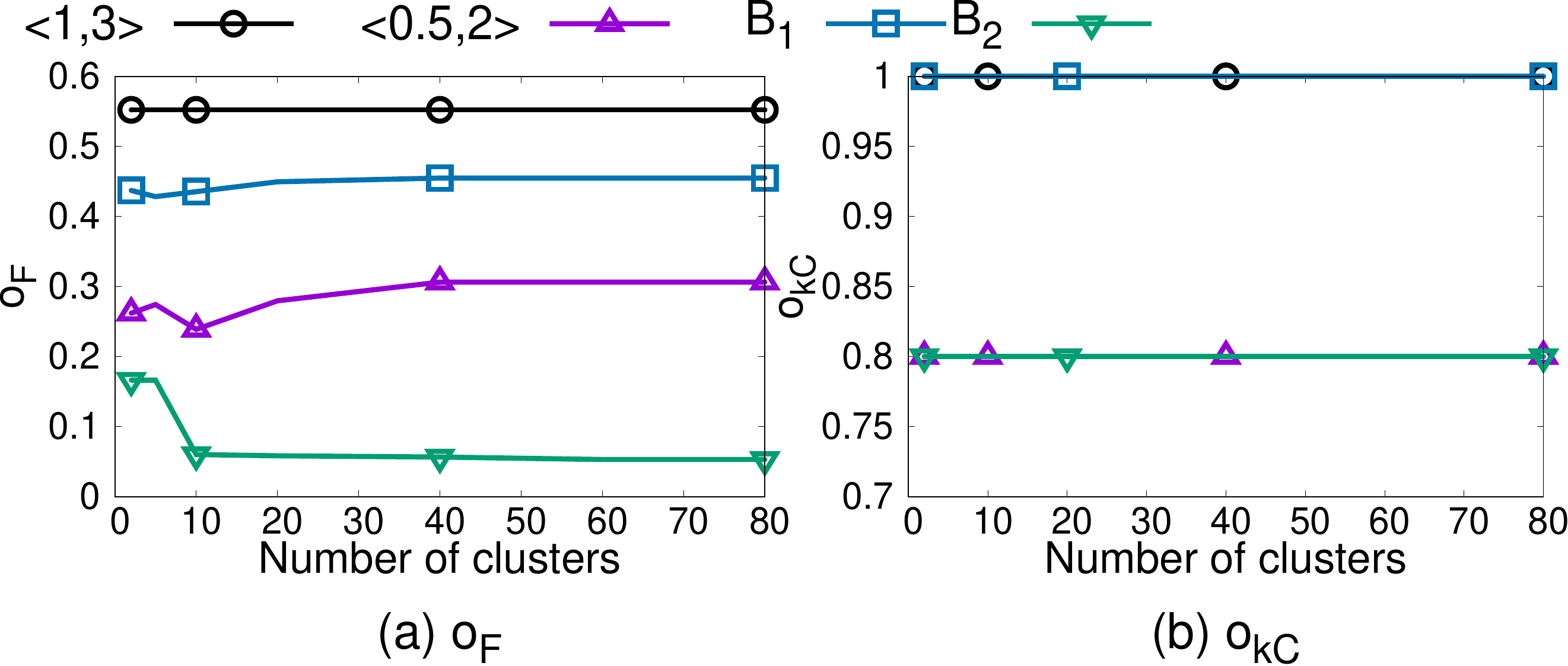}\label{slack_merged}
	\caption{Effect of slack on $\langle F,KC\rangle$.}
\end{figure}

\subsection{Discussion}
\textbf{Solution Quality }  Figure~\ref{fig-fairness} compares the performance of Zeus with the three baselines for $\mathcal{O}\!=\!\langle F, KC \rangle$ with slack values $\Delta=\langle 1,3 \rangle$ (as guaranteed by Theorem \ref{thm:fairness}). $Opt_F$ denotes the optimal fairness objective value on the Pokec dataset. It is evident that Zeus achieves optimal value for fairness and its performance with respect to $o_2$ is closer to that of $B_2$, which optimizes for $o_2$ alone.
The MOC baseline (MOC) did not find clusters even after 24 hours on this problem. Therefore, we compare its results on a smaller subset of this dataset with 100 nodes (3c, 3d). 
MOC performs well for k-center objective but significantly compromises the solution quality for $o_1$. 
Figure~\ref{fig-teamformation} shows results for $\mathcal{O}\!=\!\langle TF, KC \rangle$ with slack values $\Delta=\langle 1,10 \rangle$ (as guaranteed by Theorem \ref{Thrm:TF}). $Opt_{TF}$ denotes the optimal team formation objective value on the adult dataset. Zeus performs similar to $Opt_{TF}$ for all values of $k$ and $B_2$ provides solutions that are far from optimal for $o_{TF}$. For the KC objective, $B_2$ performs better than Zeus, as expected, and Zeus is better than $B_1$. Although the worst case approximation guarantee of Zeus is 10 times worse than that of the optimal (Thrm.~\ref{Thrm:TF}), it performs better in practice. 
Figure~\ref{fig-rs} shows the results for $\mathcal{O}\!=\!\langle RS, KC \rangle$ with slack values $\Delta=\langle 1,3 \rangle$ (as guaranteed by Theorem \ref{thm:resource}). $Opt_{RS}$ denotes the optimal resource sharing objective value on the conference dataset. It is evident that Zeus performs consistently better than the baselines for all values of $k$. 

MOC did not converge on the full dataset for the team formation and resource sharing objectives but the results on 100 nodes were similar to that of fairness. 
Overall, Zeus consistently performs better than all three baselines, on all data sets in our experiments and for all values of $k$. These experiments demonstrate the effectiveness of Zeus in optimizing multiple objectives, given a lexicographic order.

\vspace{6pt}
\noindent{\textbf{Slack }} Altering the slack from $\langle 1,3\rangle$ to $\langle 0.5,2\rangle$ improves the performance of Zeus on $o_{KC}$. Zeus performs similar to $B_2$ on $o_{KC}$, while performing better than $B_2$ on $o_{F}$ (Figures~6a,~6b
). For the sake of consistency, we consider only feasible slack values. This demonstrates that by increasing the slack corresponding to $o_1$, clustering with respect to lower-priority objectives can be improved. Similar results were observed for $\mathcal{O}\!=\!\langle TF, KC \rangle$ and $\mathcal{O}\!=\!\langle RS, KC \rangle$.

\vspace{6pt}
\noindent{\textbf{Runtime }} In our experiments, the run time of Zeus is linear in $k$ and Zeus took at most 30 minutes to form clusters for all values of $k$ and across all datasets.

\section{Conclusion and Future Work}
We introduce the relaxed multi-objective clustering, a general model for clustering with multiple objectives, given a lexicographic order and slack. By altering the slack and the lexicographic order, a wide range of real-world problems can be efficiently modeled using RLMOC. 
We also present Zeus, an efficient algorithm that processes the different objective functions sequentially and leverages a makeshift subroutine to modify the clusters for a particular objective. Theoretical properties are discussed for the three makeshifts described in the paper. Our empirical results show that Zeus effectively optimizes the objectives, in terms of solution quality and run time. Identifying makeshift for various other objectives is an interesting problem for future work. 

{
\bibliographystyle{named}
\bibliography{moc}
}

\appendix
\section{Modification of Objectives}
In this section, we describe variations of resource sharing and fairness objective functions and how the proposed makeshifts can be modified to work for these new objectives. \\

\noindent \textbf{Resource Sharing (RS)}. The RS objective function described in Sec 3.1 tries to ensure that every node has at least one neighbor in the same cluster. A generalization of RS considers a scenario where every node has at least $\gamma$ neighbors in the same cluster. Let $r^*$ denote the optimal value of the maximum distance between any pair of matched vertices. The optimal distance is initialized to the maximum distance between any pair of vertices, $r^* = \max_{(u,v)\in E} \{d(u,v)\}$, and is refined iteratively. In order to account for this modified objective, we can modify Algorithm 2 such that the edges with weight more than $r^*$ are pruned. If the graph formed by residual graphs is a valid edge cover where each node has degree more than $\gamma$, then $r^*$ is valid. In each subsequent iteration, $r^*$ is updated by performing binary search, which helps quickly identify the smallest $r^*$ that guarantees finding a valid edge cover.

\vspace{8pt}
\noindent \textbf{Fairness}. The fairness objective can be modified to handle applications where the goal is to match $\alpha$ members of minority group to $\beta$ members of majority group. Another modification is to consider more than two groups of members and the goal is be to construct a matching between all pairs of such groups. These variations can he handled easily by the makeshift described in Section 3.1 that calculates b-matching for the nodes in the dataset.
Figure \ref{fig:f2_merged} demonstrates the behavior of Zeus on the modified fairness objective where $2$ nodes from $B$ are matched with $2$ nodes of $P$.

\section{Makeshift for Other Classical Clustering Objectives}
Section 3.1 in the main paper describes the makeshifts for different set of objectives studied in the literature, when employed along with k-center objective. We now describe the variation of all these objectives for \emph{k-median} objective. The algorithm proposed by Vazirani et al. (2013)  is one of the popularly used approaches for performing k-median (kM) clustering. This algorithm can be used as the makeshift for kM. We show ways to adapt the makeshift with respect to resource sharing, fairness, and team formation objectives when applied along with k-median objective.\\

\noindent \textbf{Resource Sharing}. The makeshift proposed in Sec 3.1 for resource sharing and k-center works well for the k-median objective as well. This is because the algorithm returns the optimal edge-cover, which minimizes the maximum  weight of any edge in the edge cover along with the sum of weights of edges in the cover. \\

\noindent \textbf{Fairness}. The makeshift proposed in Sec 3.1 for fairness and k-center can be modified to work with the k-median objective. 
The same algorithm works for k-median with a slight modification that all edges $(u,v)$ are considered while computing the matching and the  weight on each edge $d(u,v)$ acts as the cost of the edge.
With this  construction of the bipartite graph, the minimum cost matching is generated. This matching guarantees that the total distance between any pair of matched vertices in minimized.\\

\noindent \textbf{Team Formation}. Instead of running k-center algorithm on the set of nodes in $X$, we run the k-median algorithm and the makeshift works the same way as described in Algorithm 4.

\begin{figure}[t]
	\centering
	\includegraphics[width=3.2in, height=1.5in]{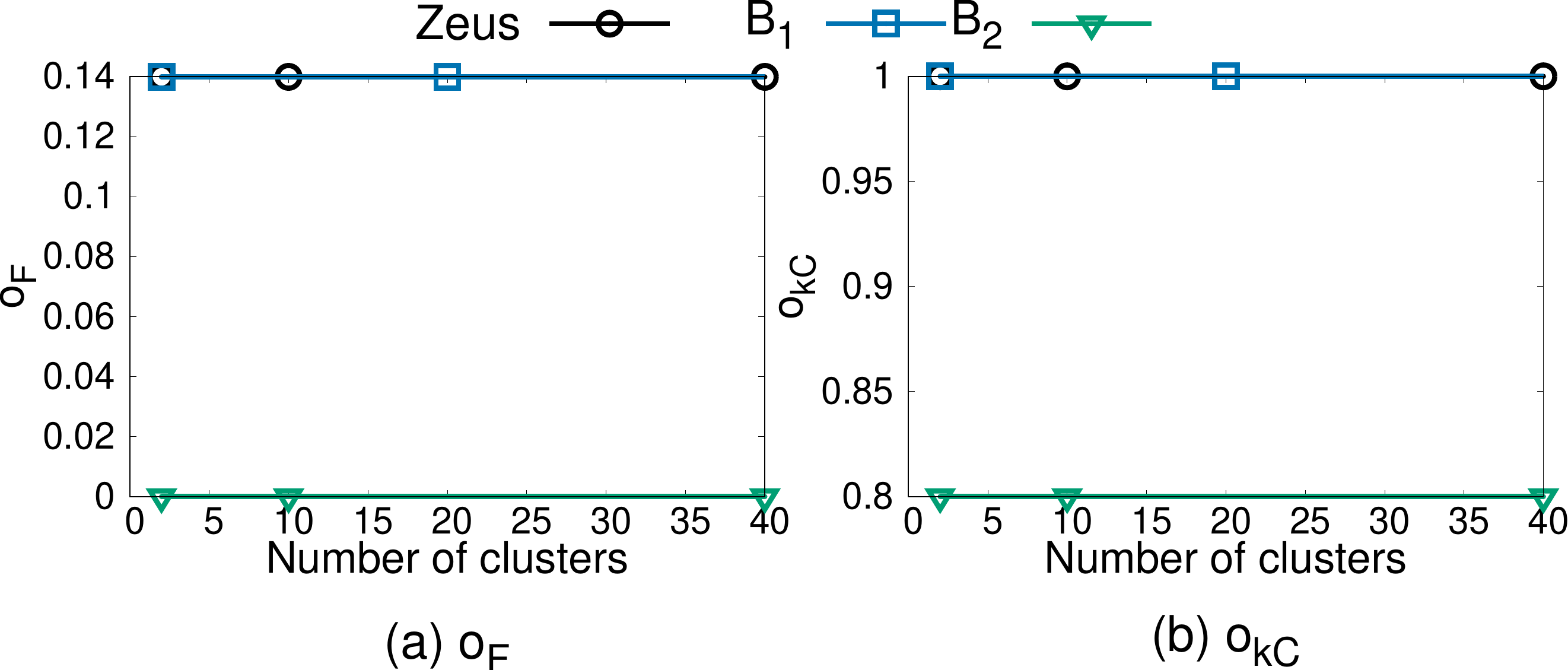}
	\caption{Results on modified Fairness objective.\label{fig:f2_merged}}
	\vspace{-7pt}
\end{figure}
\end{document}